\begin{document}
\title{Achieving Fair PCA Using Joint Eigenvalue Decomposition}
\titlerunning{JEVD-PCA}
%
\author{Paper ID: 701}
\author{Vidhi Rathore \and
Naresh Manwani}
\authorrunning{Rathore and Manwani}
%
\institute{Machine Learning Lab, IIIT Hyderabad, Telangana, India\\
\email{vidhi.rathore@research.iiit.ac.in, naresh.manwani@iiit.ac.in}}
%
\maketitle              
\begin{abstract}
Principal Component Analysis (PCA) is a widely used method for dimensionality reduction, but it often overlooks fairness, especially when working with data that includes demographic characteristics. This can lead to biased representations that disproportionately affect certain groups. To address this issue, our approach incorporates Joint Eigenvalue Decomposition (JEVD), a technique that enables the simultaneous diagonalization of multiple matrices, ensuring both fair and efficient representations. We formally show that the optimal solution of JEVD leads to a fair PCA solution. By integrating JEVD with PCA, we strike an optimal balance between preserving data structure and promoting fairness across diverse groups. We demonstrate that our method outperforms existing baseline approaches in fairness and representational quality on various datasets. It retains the core advantages of PCA while ensuring that sensitive demographic attributes do not create disparities in the reduced representation. 

\keywords{Fair PCA \and Joint Eigenvalue Decomposition \and Multi-attribute fairness \and Non-binary demographic fairness \and Dimensionality reduction.}

\end{abstract}

\section{Introduction}
The importance of fairness in machine learning has garnered significant attention in recent years, particularly as models are increasingly employed in high-stakes domains such as criminal justice, finance, and healthcare. For instance, the COMPAS algorithm \cite{angwin2016compas}, widely used in the U.S. for criminal risk assessment, was found to disproportionately label Black defendants as high-risk for reoffending, raising concerns about racial bias. Similarly, facial recognition systems \cite{turing_bias_facial_recognition} have been shown to have higher error rates for people of a particular colour, further highlighting the potential for biased outcomes in ML systems. These cases exemplify the critical need for fairness in machine learning, and a vast body of literature has been dedicated to addressing these concerns.

Recent literature emphasizes the need for fairness-aware algorithms that achieve high accuracy and mitigate bias. For instance, the works of \cite{hardt2016}, \cite{zafar2015} and \cite{agarwal2018} have laid foundational metrics for assessing fairness in ML models.

The role of fairness in dimensionality reduction has also been explored in many studies. Dimensionality reduction techniques, such as Principal Component Analysis (PCA), are essential for simplifying complex datasets and improving computational efficiency. However, when applied to data that contain sensitive attributes (e.g., gender, race), these techniques can inadvertently amplify biases by distorting relationships between features in the reduced space. This can lead to unfair treatment of certain groups, as biased representations are carried out in subsequent ML models.
Fair Principal Component Analysis (Fair PCA) has gained significant attention as an extension of classical PCA, aiming to balance the trade-off between dimensionality reduction and fairness considerations in diverse applications. 
Approaches for fair PCA vary based on the fairness criterion used. Equitable reconstruction loss \cite{Samadi2018,Kamani2022}, maximum mean discrepancy (MMD) \cite{Lee2022}, equitable reconstruction error \cite{Pelegrina2023}, representations found by fair PCA leading fair classifiers \cite{Olfat2019}, null-it-out based approach \cite{Lee2024} are some of the recent approaches for fair PCA. In this paper, we use equitable reconstruction losses as fairness criteria. Such a criterion is used in \cite{Samadi2018,Kamani2022} also. However, the algorithm proposed in \cite{Samadi2018} used SDP which has time complexity of $\mathcal{O}(d^{6.5}\log\frac{1}{\epsilon})$, making it less scalable for higher dimensions.
We simplify the expression for reconstruction loss and offer a more scalable algorithm for fair PCA. Our fair PCA approach boils down to the joint eigenvalue decomposition (JEVD) problem \cite{Andre2015},  \cite{Mesloub2013}, \cite{Mesloub2018}, \cite{Remi2020}, \cite{Luciani2010}. Our key contributions are as follows.
\begin{itemize}
    \item We simplify the reconstruction loss expression for PCA which plays a vital role in our fair PCA formulation.
    \item We show that minimizing JEVD objective function leads to fair PCA.
    \item We propose a fast JEVD-based fair PCA algorithm called JEVD-PCA. The proposed approach enjoys the time complexity of $\mathcal{O}(d^3)$. 
    \item We provide a thorough experimental comparison of the proposed approach with different fair PCA baselines on multiple datasets. We evaluate different metrics to show the efficacy of the proposed approach.
\end{itemize}

\section{Related Works}
\subsection{Fair PCA}
In \cite{Samadi2018}, a polynomial-time algorithm for Fair PCA is proposed that minimizes the maximum reconstruction loss between two groups. 
The major computational task in \cite{Samadi2018} involves solving a Semidefinite Program (SDP) which has time complexity \( \mathcal{O}(d^{6.5} \log\frac{1}{\epsilon}) \), where $d$ is the dimension of the data. 
The SDP step dominates the overall runtime, making the algorithm computationally expensive for moderate values of \( d \), limiting its scalability for large problems. 
Another notable study by \cite{Olfat2019} proposes a quantitative definition of fair PCA, which ensures that sensitive attributes cannot be inferred from the dimensionality-reduced data. 
This approach also solves semidefinite programs (SDPs) to achieve fair PCA.
In \cite{Lee2022}, the authors introduce a new mathematical definition of fairness for PCA, utilizing the Maximum Mean Discrepancy (MMD) as a metric. 
Their approach formulates the fair PCA problem as a constrained optimization problem over the Stiefel manifold, allowing for efficient computation and providing local optimality guarantees. 
In \cite{Lee2024}, the authors introduce the "Null It Out" approach, which preserves variance while nullifying sensitive attribute information. They establish a new statistical framework, probably approximately fair and optimal (PAFO)-learnability, providing guarantees about the learnability of Fair PCA. The paper proposes a memory-efficient algorithm that enables data processing in a streaming fashion. 
The work \cite{Pelegrina2023} introduces a new PCA algorithm that explicitly incorporates fairness in the dimensionality reduction process. 
This algorithm formulates an optimization problem combining both the overall reconstruction error and a fairness measure. 
\cite{Kamani2022} introduces the concept of Pareto fair PCA, allowing optimal trade-offs between learning objectives and fairness criteria. The paper provides 
theoretical guarantees on convergence to optimal compromises or Pareto stationary points.

\subsection{Joint Eigenvalue Decomposition (JEVD)}
Joint Eigenvalue Decomposition (JEVD) has emerged as a foundational technique in multivariate signal processing and machine learning, enabling the simultaneous diagonalization of multiple matrices to uncover latent structures and shared properties across datasets. 
An efficient and stable JEVD method is introduced in \cite{Mesloub2018}, which leverages Generalized Givens Rotations. Their formulation for JEVD incorporates Shear rotations to minimize the departure from symmetry in matrices, which results in improved performance and stability. 
The algorithm proposed in \cite{Mesloub2013} extends the capabilities of JEVD to non-orthogonal joint diagonalization (NOJD) of complex matrices. This algorithm, 
integrates Givens and Shear rotations to enhance convergence rates and estimation accuracy while simultaneously reducing computational overhead. 
The JDTM algorithm introduced in \cite{Luciani2010} targets the JEVD of real non-defective matrices. Using a Jacobi-like procedure based on polar matrix decomposition, JDTM introduces a new optimization criterion specifically tailored for hyperbolic matrices. 
The algorithm described in \cite{Andre2015} employs a novel iterative method that bypasses traditional sweeping procedures. Instead, it updates the entire matrix of eigenvectors in each iteration using a first-order approximation of the inverse eigenvector matrix. The algorithm achieves competitive performance with significantly reduced numerical complexity by striking a balance between iteration count and computational cost. Expanding on this, \cite{Remi2020} introduces a JEVD method based on Taylor Expansion (JDTE), which employs a first-order Taylor expansion for multiplicative updates. This approach enables simultaneous optimization of all parameters at each iteration, a strategy previously unexplored in JEVD algorithms. 


\section{Preliminaries}

Let $X\in \mathbb{R}^{n\times d}$ be the data matrix with \( n \) samples, each of dimension \( d \). Let \( \Vert \cdot \Vert_F \) denotes the Frobenius norm \cite{doi:10.1137/1.9781421407944} and $I$ denotes identity matrix . Let $s\in\{A,B\}$ be the sensitive attribute and $X_A,\;X_B$ be the submatrices of $X$ corresponding to subgroups $A$ and $B$ respectively. Also, let $n_A$ and $n_B$ be the number of rows in $X_A$ and $X_B$ respectively. 

\begin{definition}
    {\bf Reconstruction Error of an Orthonormal Projection Matrix $U$: }Reconstruction error of an orthonormal projection matrix $U\in \mathbb{R}^{d\times r}$ ($U^\top U=I)$ on matrix $X$ characterizes how well it reconstructs matrix $X$. It is defined as the Frobenius norm of the difference between matrix $X$ and the projected matrix $XUU^\top$.  
    \begin{align}
        R(U;X)=\frac{1}{n}\Vert X-XUU^\top\Vert^2_F
        \label{eq:recons-error}
    \end{align}
\end{definition}
Note that $\frac{1}{n}\Vert X-XUU^\top\Vert^2=
\frac{1}{n}\text{Trace}(XX^\top)-\frac{1}{n}\text{Trace}(U^\top X^\top XU)$. 

\begin{definition}
    {\bf Reconstruction Loss \cite{Samadi2018}:} Let $X\in \mathbb{R}^{n\times d}$ and $U^*$ be the optimal orthogonal projection matrices of rank $r$ that minimizes reconstruction error. Thus, $U^*=\arg\min_{U\in \mathbb{R}^{d\times r};\;U^\top U=I} R(U;X)$. Then, for any matrix $U\in \mathbb{R}^{d\times r}$, the reconstruction loss $L(U;X)$ is the difference between the reconstruction errors corresponding to $U$ and $U^*$. Thus, $L(U;X)=R(U;X)-R(U^*,X)$. 
\end{definition}
Thus, reconstruction loss captures how poorly projection matrix $U$ performs compared to the optimal $r$-rank projection matrix $U^*$ in terms of reconstruction error.

\subsection{Principal Component Analysis (PCA)}
The goal of standard PCA is to find an orthogonal projection matrix \( U \in \mathbb{R}^{d \times r} \) that minimizes the reconstruction error of the data. Thus, standard PCA solves the following optimization problem to find the projection matrix $U^*$.
\begin{align*}
U^*={\arg\min}_{U\in\mathbb{R}^{d\times k};\;U^\top U = I} \;R(U;X) ={\arg\max}_{U\in\mathbb{R}^{d\times k};\;U^\top U = I} \;\text{Trace}(U^\top X^\top XU)
\end{align*}
Note that, if the data matrix $X$ is mean centered, then $\frac{1}{n}X^\top X=\Sigma$, where $\Sigma$ is covariance matrix. For a mean centered matrix $X$, $\frac{1}{n}\text{Trace}(U^\top X^\top XU)=\text{Trace}(U^\top \Sigma U)=\sum_{i=1}^r\mathbf{u}_i^\top \Sigma \mathbf{u}_i$, where $\mathbf{u}_i$ is the $i^{th}$ column of matrix $U$. Thus, minimizing reconstruction error is equivalent to maximizing the total variance of the projected matrix. To achieve this, PCA finds matrix $U$ whose columns are top $k$ eigenvectors of covariance matrix $\Sigma$ \cite{Jolliffe2011}. Standard PCA does not account for possible disparities between different groups within the dataset.

\subsection{Fair PCA}
PCA finds the optimal projection matrix $U^*$, which minimizes the overall reconstruction error. Reconstruction errors $R(U^*;X_A)$ and $R(U^*;B)$ corresponding to sensitive groups $A$ and $B$ might be far apart. The reason is that the subspaces associated with the two groups might be totally different. We can find individual optimal projection matrices $U_A^*$ and $U_B^*$ for groups $A$ and $B$ separately, but it poses several legal and ethical concerns \cite{Lipton2017DoesMM}. Thus, the objective of the fair PCA is to find a single projection matrix $U_{fair}$ using the entire data, which achieves similar reconstruction loss for both the groups $A$ and $B$. The formal definition of this Fair PCA criteria is given below.
\begin{definition}
    {\bf Fair PCA: }Let $X\in \mathbb{R}^{n\times d}$ be the data matrix and let $S$ be the sensitive attribute which takes two values $S\in\{A,B\}$. Thus, Fair PCA for data $X$ is the one which finds orthonormal projection matrix $U\in \mathbb{R}^{d\times r}$ achieving equal reconstruction loss for group $A$ and group $B$.
    In other words, projection matrix leads $U_{\text{Fair}}\in \mathbb{R}^{d\times r}$ leads to Fair-PCA if 
\begin{align}
\label{eq:fair-pca-condition}
L(X_A;U_{\text{Fair}})=L(X_B;U_{\text{Fair}})
\end{align}
\end{definition}

\paragraph{\bf Objective Function for Fair PCA: }We need to find projection matrix $U\in \mathbb{R}^{d\times r}$ ($U^\top U=I$) for which $L(X_A;U)=L(X_B;U)$. Such a projection matrix $U$ can be achieved by minimizing the maximum of $L(X_A;U)$ and $L(X_B;U)$ \cite{Samadi2018}. 
\begin{align}
\label{eq:obj}
  U_{\text{Fair}}=  \underset{U\in \mathbb{R}^{d\times r};\;U^\top U=I}{\arg\min}\;\;
    \max\left(L(X_A;U),L(X_B;U)\right)
\end{align}

\begin{theorem}\cite{Samadi2018}
    Let $U_{\text{Fair}}$ be the solution to the optimization problem (\ref{eq:obj}), then $L(X_A;U_{\text{Fair}})=L(X_B;U_{\text{Fair}})$.
\end{theorem}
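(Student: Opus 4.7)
The plan is to argue by contradiction in the style of a classical minimax equalization argument. Suppose, toward a contradiction, that $U_{\text{Fair}}$ solves \eqref{eq:obj} but $L(X_A;U_{\text{Fair}})\neq L(X_B;U_{\text{Fair}})$. Without loss of generality take $L(X_A;U_{\text{Fair}})>L(X_B;U_{\text{Fair}})$. Since reconstruction losses are non-negative by construction, the strict inequality forces $L(X_A;U_{\text{Fair}})>0$, which means $U_{\text{Fair}}$ is not a global minimizer of $L(X_A;\cdot)$ on the Stiefel manifold $\mathcal{S}_{d,r}=\{U\in\mathbb{R}^{d\times r}: U^\top U=I\}$.

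The core of the proof is to exhibit a nearby feasible point $U'\in\mathcal{S}_{d,r}$ at which the maximum of the two losses is strictly smaller, contradicting optimality. Using the identity noted after \eqref{eq:recons-error}, $L(X_A;U)=\text{const}-\tfrac{1}{n_A}\text{Trace}(U^\top X_A^\top X_A U)$, so $L(X_A;\cdot)$ is (up to an additive constant) the negative trace form associated with $X_A^\top X_A$. On $\mathcal{S}_{d,r}$, its critical points are exactly the orthonormal frames whose columns span an invariant subspace of $X_A^\top X_A$, and the only local (equivalently, global) minimum is attained at the span of the top $r$ eigenvectors; every other critical point is a saddle with an explicit descent direction along a tangent rotation that swaps in a larger eigenvalue. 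Hence, since $L(X_A;U_{\text{Fair}})>0$ rules out $U_{\text{Fair}}$ being the global minimizer, a tangent descent direction $V$ for $L(X_A;\cdot)$ at $U_{\text{Fair}}$ exists.

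Next I would retract a small multiple of $V$ onto $\mathcal{S}_{d,r}$ to get a curve $U(\epsilon)\in\mathcal{S}_{d,r}$ with $U(0)=U_{\text{Fair}}$ and $\tfrac{d}{d\epsilon}L(X_A;U(\epsilon))\big|_{\epsilon=0}<0$. By continuity of both losses in $U$ and the strict positive gap $\Delta:=L(X_A;U_{\text{Fair}})-L(X_B;U_{\text{Fair}})>0$, one can choose $\epsilon>0$ small enough that simultaneously $L(X_A;U(\epsilon))<L(X_A;U_{\text{Fair}})$ and $L(X_A;U(\epsilon))>L(X_B;U(\epsilon))$ (the latter because both losses change by $O(\epsilon)$ while $\Delta$ is fixed). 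Then $\max(L(X_A;U(\epsilon)),L(X_B;U(\epsilon)))=L(X_A;U(\epsilon))<\max(L(X_A;U_{\text{Fair}}),L(X_B;U_{\text{Fair}}))$, contradicting the minimax optimality of $U_{\text{Fair}}$.

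The step I expect to be the main obstacle is the rigorous construction of the descent direction: simply observing that $U_{\text{Fair}}$ is not a global minimizer of $L(X_A;\cdot)$ is not, in a non-convex manifold setting, enough to rule out its being a local minimizer. Overcoming this requires the structural fact above about trace-form optimization on $\mathcal{S}_{d,r}$, namely that the landscape has no spurious local minima, so any non-optimal point is either non-critical (use the Riemannian gradient) or a saddle (use a negative-curvature direction). Once this is in hand, the rest of the argument is a routine continuity/perturbation calculation.
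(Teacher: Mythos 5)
Your argument is correct, and it is worth noting that the paper itself gives no proof of this theorem --- it is stated as a citation to Samadi et al., whose own argument establishes loss equalization for the \emph{convex} relaxation of the problem (optimizing over the fantope $\{P : 0 \preceq P \preceq I,\ \mathrm{Tr}(P)=r\}$, on which both losses are affine, so that linearly interpolating from $P^{*}$ toward the projector that is optimal for the higher-loss group immediately yields a strictly better feasible point). Your proof instead works directly on the nonconvex Stiefel manifold over which problem (\ref{eq:obj}) is actually posed in this paper, and you correctly identify the exact point where the naive interpolation argument breaks down there: ``not a global minimizer of $L(X_A;\cdot)$'' does not by itself supply a local descent direction on a nonconvex feasible set. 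The ingredient you use to close this gap --- that the block trace form $\mathrm{Trace}(U^{\top}MU)$ has no spurious local optima on the Stiefel manifold, every non-optimal critical point being a saddle with an explicit two-plane rotation (replace a column $u_{i}$ carrying eigenvalue $\lambda_{i}$ by $\cos\epsilon\,u_{i}+\sin\epsilon\,v$ for an eigenvector $v$ orthogonal to the column span with eigenvalue $\mu>\lambda_{i}$, which changes the trace by $(\mu-\lambda_{i})\sin^{2}\epsilon$ while preserving orthonormality) --- is classical and correct, and your perturbation bookkeeping at the end is sound: even when the descent in $L(X_A;\cdot)$ is only second order, the fixed positive gap $\Delta$ guarantees that group $A$ still attains the maximum at $U(\epsilon)$, so the max strictly decreases, contradicting optimality. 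The one point worth making explicit if you write this up is that both losses depend on $U$ only through $UU^{\top}$, so you may freely right-multiply $U_{\text{Fair}}$ by an orthogonal $Q$ to diagonalize $U^{\top}M_{A}U$ before performing the rotation, without changing either loss. Compared with the cited route, yours buys a statement that holds for the rank-$r$ projection problem itself rather than for its relaxation, at the cost of invoking the landscape result for the block Rayleigh quotient.
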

 
\subsection{Joint EigenValue Decomposition (JEVD)}

The JEVD method aims to simultaneously diagonalize a set of \( K \) non-defective matrices \( M_k,\;k=1\ldots K \). Thus, JEVD finds a matrix $U$ such that
\[
M_k = U D_k U^{-1}, \quad \forall k = 1, \dots, K,
\]
where \( D_k \) are diagonal matrices. Thus, columns of \( U \) are eigenvectors of all $M_k,\;k=1\ldots K$. The solution is unique up to a permutation and scaling indeterminacy of \( U \). The uniqueness condition is fulfilled when the rows of the matrix containing the diagonals of \( D_k \) are distinct. Most of the JEVD algorithms find such a matrix $U$ by ensuring $U^{-1}M_k U$ is as much diagonal as possible for each $k\in [K]$. This objective is achieved by minimizing the following.
\begin{align}
\label{eq:jevd-obj}
C(U) = \sum_{k=1}^K \Vert \text{ZDiag}\left( U^{-1} M_k U \right) \Vert_F^2,
\end{align}
Here, $\text{ZDiag}:\mathbb{R}^{c\times c}\rightarrow \mathbb{R}^{c\times c-1}$ outputs a off-diagonal elements of input matrix. Thus, the objective function makes each matrix $U^{-1}M_kU$ diagonal by forcing off-diagonal elements to be zero.

\section{Proposed Fair PCA Approach}
Here, we propose a fair PCA approach for two sensitive groups. Its extension for multiple sensitive groups is trivial. The proposed Fair-PCA approach has a dual objective: (a) learn lower dimensional representation by learning a projection matrix, and (b) representations learnt should be fair across different groups regarding reconstruction loss. 
To learn fair PCA, we find the projection matrix $U\in \mathbb{R}^{d\times r}$ which minimizes reconstruction losses $L(X_A;U)$ and $L(X_B;U)$ simultaneously. 
Going ahead, we first simplify the expression for reconstruction loss $L(X;U)$. 
\begin{lemma}
\label{lemma1}
    Given matrix $X\in \mathbb{R}^{n\times d}$, its reconstruction loss of an orthonormal projection matrix $U\in \mathbb{R}^{d\times r}$ can be expressed as follows.
    \begin{align*}
        L(X;U)
        =\text{Trace}\left(U^\top \frac{1}{n}\left[\frac{1}{r}\sum_{k=1}^r\sigma_k^2(X)I_{d\times d}-X^\top X\right]U\right).
    \end{align*}
\end{lemma}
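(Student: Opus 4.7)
The plan is to start from the definition $L(X;U) = R(U;X) - R(U^*;X)$ and substitute the identity stated just after the definition of reconstruction error, namely $R(U;X) = \frac{1}{n}\text{Trace}(XX^\top) - \frac{1}{n}\text{Trace}(U^\top X^\top X U)$. Applying this to both $U$ and $U^*$, the $\frac{1}{n}\text{Trace}(XX^\top)$ terms cancel, so I am left with
\begin{equation*}
L(X;U) = \frac{1}{n}\text{Trace}\bigl((U^*)^\top X^\top X U^*\bigr) - \frac{1}{n}\text{Trace}(U^\top X^\top X U).
\end{equation*}

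Next I would evaluate the first term. Since $U^*$ is the optimal rank-$r$ projection (equivalently the maximizer of $\text{Trace}(U^\top X^\top X U)$ over orthonormal $U$), its columns are the top-$r$ eigenvectors of $X^\top X$. Because the eigenvalues of $X^\top X$ are exactly the squared singular values $\sigma_k^2(X)$, this yields $\text{Trace}\bigl((U^*)^\top X^\top X U^*\bigr)=\sum_{k=1}^{r}\sigma_k^2(X)$. So at this point
\begin{equation*}
L(X;U) = \frac{1}{n}\sum_{k=1}^{r}\sigma_k^2(X) - \frac{1}{n}\text{Trace}(U^\top X^\top X U).
\end{equation*}

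The final step is the small algebraic trick that turns the constant term into a trace involving $U$. Because $U^\top U = I_{r\times r}$, I have $\text{Trace}(U^\top U)=r$, so $\frac{1}{n}\sum_{k=1}^{r}\sigma_k^2(X) = \frac{1}{nr}\sum_{k=1}^{r}\sigma_k^2(X)\cdot\text{Trace}(U^\top I_{d\times d} U)$. Pulling everything under a single trace and using linearity gives exactly the claimed expression
\begin{equation*}
L(X;U)=\text{Trace}\!\left(U^\top \frac{1}{n}\!\left[\frac{1}{r}\sum_{k=1}^{r}\sigma_k^2(X) I_{d\times d} - X^\top X\right]\!U\right).
\end{equation*}

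There is no real obstacle here; the proof is a direct chain of substitutions. The only mildly non-obvious step is recognizing that one should rewrite the constant $\sum_k \sigma_k^2(X)/n$ as $\frac{1}{nr}\sum_k \sigma_k^2(X)\,\text{Trace}(U^\top U)$ so that both terms can be collected inside one trace quadratic in $U$; this is the form required for the subsequent JEVD reformulation.
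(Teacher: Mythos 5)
Your proposal is correct and follows essentially the same route as the paper's proof: cancel the $\frac{1}{n}\mathrm{Trace}(XX^\top)$ terms, identify $\mathrm{Trace}\bigl((U^*)^\top X^\top X U^*\bigr)=\sum_{k=1}^r\sigma_k^2(X)$ for the optimal $U^*$, and use $\mathrm{Trace}(U^\top U)=r$ to fold the constant into a single trace quadratic in $U$. The only difference is cosmetic ordering — you evaluate the singular-value sum before the trace trick, the paper after — so there is nothing substantive to add.
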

\begin{proof}
    Using the definition of reconstruction loss, we get the following.
    \begin{align}
        \nonumber & L(X;U)= R(X;U)-R(X;U^*)= \frac{1}{n}\text{Trace}(U^{*\top} X^\top XU^*)-\frac{1}{n}\text{Trace}(U^\top X^\top XU)\\
         \nonumber &= \frac{1}{r}\;\text{Trace}(U^{*\top} \frac{X^\top X}{n}U^*)\text{Trace}(U^\top U)-\text{Trace}(U^\top \frac{X^\top X}{n}U)\\
        &=\text{Trace}\left(U^\top \left[\frac{1}{r}\;\text{Trace}(U^{*\top}\frac{X^\top X}{n}U^*)I_{d\times d}-\frac{1}{n}X^\top X\right]U\right)
        \label{loss-new}
    \end{align}
    In the third step, we have used the fact that $\text{Trace}(U^\top U)=r$. In the fourth step, we have used the property of trace, that is $\text{Trace}(A+B)=\text{Trace}(A)+\text{Trace}(B),\;\forall A,B\in \mathbb{R}^{m\times m}$.
    Using Theorem 23.2 in \cite{10.5555/2621980}, we know that for $U^*$, $\text{Trace}(U^{*\top} X^\top XU^*)=\sum_{k=1}^r \sigma_k^2(X)$,
    where $\sigma_k(X)$ is the $k^{th}$ largest singular value of $X$. Using this, in Eq.(\ref{loss-new}), we get
    \begin{align*}
        L(X;U)
        =\text{Trace}\left(U^\top \frac{1}{n}\left[\frac{1}{r}\sum_{k=1}^r\sigma_k^2(X)I_{d\times d}-X^\top X\right]U\right)\quad\quad\quad \blacksquare
    \end{align*} 
\end{proof}

\subsection{Objective for Fair PCA}
    Let $U_A$ and $U_B$ be the optimal $r$-rank orthonormal projection matrices minimizing reconstruction errors $R(X_A;U)$ and $R(X_B;U)$ respectively. Let $M_{X_A}$ and $M_{X_B}$ be the following matrices.
\begin{align*}
    M_{X_A}&=\frac{1}{n_A}\left[X_A^\top X_A-\frac{1}{r}\sum_{k=1}^r \sigma_k^2(X_A)I_{d\times d}\right]\\
    M_{X_B}&=\frac{1}{n_B}\left[X_{B}^\top X_B-\frac{1}{r}\sum_{k=1}^r \sigma_k^2(X_B)I_{d\times d}\right]
\end{align*}
Where $\sigma_k(X_A)$ and $\sigma_k(X_B)$ be the $k^{th}$ largest singular values of $X_A$ and $X_B$ respectively. 

A fair projection matrix $U$ is the one that jointly minimizes $L(X_A;U)$ and $L(X_B;U)$. Then, with the help of Lemma~\ref{lemma1}, this is equivalent to $U$ that jointly maximizes $\text{Trace}(U^\top M_{X_A}U)$ and $\text{Trace}(U^\top M_{X_B}U)$. Which is the same as the joint diagonalization of matrices $M_{X_A}$ and $M_{X_B}$. Thus, we propose a fair PCA problem as a joint eigenvalue decomposition of matrices $M_{X_A}$ and $M_{X_B}$. To find $U_{\text{Fair}}$, we minimize the objective used in joint eigenvalue decomposition as follows.
\begin{align}
    \nonumber U_{\text{Fair}}&=\underset{U\in \mathbb{R}^{d\times d};\;U^\top U=I}{\arg\min}\;\;
    C(U)\\
    \label{eq:fair-jevd}&=\underset{U\in \mathbb{R}^{d\times d};\;U^\top U=I}{\arg\min}\;\;\Vert \text{ZDiag}({U}^{-1}M_{X_A}U)\Vert^2 + \Vert \text{ZDiag}({U}^{-1}M_{X_B}U)\Vert^2
\end{align}
We now show that if $U^*$ is the global minimizer of $C(U)$ in eq.(\ref{eq:fair-jevd}) such that $C(U^*)=0$, then $U^*$ satisfies the fair PCA condition described in eq.(\ref{eq:fair-pca-condition}).
\begin{theorem}
    Let $U^*=\arg\min_{U\in \mathbb{R}^{d\times d}}\;C(U)$, where $C(U)$ is described in eq.(\ref{eq:jevd-obj}) and $C(U^*)=0$, then $L(X_A;U^*)=L(X_B;U^*)$.
\end{theorem}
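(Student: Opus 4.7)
The plan is to convert $C(U^*)=0$ into an exact joint diagonalization of $M_{X_A}$ and $M_{X_B}$, then apply Lemma~\ref{lemma1} to reduce each loss to a trace, and finally close the argument using the cyclic invariance of the trace.

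First, I would unpack $C(U^*)=0$. Since $C(U)=\Vert \text{ZDiag}(U^{-1}M_{X_A}U)\Vert_F^2+\Vert \text{ZDiag}(U^{-1}M_{X_B}U)\Vert_F^2$ is a sum of two non-negative terms, each summand must vanish, so every off-diagonal entry of $U^{*-1}M_{X_A}U^*$ and $U^{*-1}M_{X_B}U^*$ is zero and both matrices are diagonal. The orthogonality constraint $U^{*\top}U^*=I$ gives $U^{*-1}=U^{*\top}$, so there exist diagonal matrices $D_A,D_B$ with $U^{*\top}M_{X_A}U^*=D_A$ and $U^{*\top}M_{X_B}U^*=D_B$.

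Next, I would apply Lemma~\ref{lemma1}. Comparing the bracket in Lemma~\ref{lemma1} with the definitions of $M_{X_A}$ and $M_{X_B}$ yields $L(X_A;U^*)=-\text{Trace}(U^{*\top}M_{X_A}U^*)=-\text{Trace}(D_A)$, and by the same calculation $L(X_B;U^*)=-\text{Trace}(D_B)$. Therefore the fair-PCA identity $L(X_A;U^*)=L(X_B;U^*)$ reduces to the scalar equation $\text{Trace}(D_A)=\text{Trace}(D_B)$. Using cyclic invariance of the trace together with $U^*U^{*\top}=I$, one has $\text{Trace}(D_A)=\text{Trace}(M_{X_A})$ and analogously $\text{Trace}(D_B)=\text{Trace}(M_{X_B})$.

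The hard part is the final scalar identity $\text{Trace}(M_{X_A})=\text{Trace}(M_{X_B})$. Substituting the definitions gives $\text{Trace}(M_{X_A})=\frac{1}{n_A}\left[\Vert X_A\Vert_F^2-\frac{d}{r}\sum_{k=1}^r\sigma_k^2(X_A)\right]$ with an analogous expression for $B$. I expect this step to hinge on the dimensionality convention for $U^*$: with $U^*\in\mathbb{R}^{d\times d}$ orthogonal, the identity $U^*U^{*\top}=I$ forces both losses to collapse to the same quantity, and otherwise an implicit normalization in the JEVD formulation must be invoked. The structural content of the proof is already captured by Step~1 together with Lemma~\ref{lemma1}; the remaining trace equality should then be a short computation.
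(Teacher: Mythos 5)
Your first two steps match the paper: $C(U^*)=0$ forces both $U^{*-1}M_{X_A}U^*$ and $U^{*-1}M_{X_B}U^*$ to be diagonal, symmetry/orthonormality gives $U^{*-1}=U^{*\top}$, and Lemma~\ref{lemma1} gives $L(X_s;U^*)=-\text{Trace}(U^{*\top}M_{X_s}U^*)$. The gap is your endgame. The cyclic-trace identity $\text{Trace}(U^{*\top}M_{X_s}U^*)=\text{Trace}(M_{X_s})$ reduces the claim to $\text{Trace}(M_{X_A})=\text{Trace}(M_{X_B})$, i.e.
\begin{align*}
\frac{1}{n_A}\Bigl[\Vert X_A\Vert_F^2-\frac{d}{r}\sum_{k=1}^r\sigma_k^2(X_A)\Bigr]=\frac{1}{n_B}\Bigl[\Vert X_B\Vert_F^2-\frac{d}{r}\sum_{k=1}^r\sigma_k^2(X_B)\Bigr],
\end{align*}
which is a data-dependent coincidence, not a theorem; no ``short computation'' will deliver it. Worse, your own cyclic-trace step shows that for a full $d\times d$ orthogonal $U$ the quantity $\text{Trace}(U^\top M_{X_s}U)$ is the same for \emph{every} orthogonal $U$, so this route never actually uses the hypothesis $C(U^*)=0$: it ends in a statement that is either vacuous or false depending on how the $d\times d$ versus $d\times r$ convention is resolved. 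You correctly sensed that the dimension convention is where the difficulty lives, but deferring it does not close the proof.

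The paper closes the argument by a different route. From joint diagonalization it concludes that $U^*$ is \emph{simultaneously} an $\arg\max$ of $\text{Trace}(U^\top M_{X_A}U)$ and of $\text{Trace}(U^\top M_{X_B}U)$ (implicitly over rank-$r$ orthonormal frames, where the maximizer is the set of top-$r$ eigenvectors, which a joint diagonalizer contains); a simultaneous maximizer of both objectives is in particular a maximizer of their minimum, hence a solution of the min--max problem (\ref{eq:obj}); Theorem~1 (quoted from \cite{Samadi2018}) then yields $L(X_A;U^*)=L(X_B;U^*)$. To repair your write-up you would need to replace the trace-equality endgame with this min--max/Theorem~1 argument (or an equivalent one), and restrict the trace functionals to $d\times r$ sub-frames of $U^*$ so that they are not constant over the feasible set.
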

\begin{proof}
    $U^*=\arg\min_{U\in \mathbb{R}^{d\times d}}\;C(U)$ and $C(U^*)=0$. Then $\Vert \text{ZDiag}({U^*}^{-1}M_{X_A}U^*\Vert^2 + \Vert \text{ZDiag}({U^*}^{-1}M_{X_B}U^*\Vert^2=0$. Using the property of the norm, we see that elements of $\text{ZDiag}({U^*}^{-1}M_{X_A}U^*)$ and $\text{ZDiag}({U^*}^{-1}M_{X_B}U^*)$ are all zero. Thus, ${U^*}^{-1}M_{X_A}U^*=D_A$ and ${U^*}^{-1}M_{X_B}U^*=D_B$, where $D_A$ and $D_B$ are some diagonal matrices. Since $M_{X_A}$ and $M_{X_B}$ are symmetric matrices, $U^*$ must be orthonromal. Thus, ${U^*}^{-1}=U^{*\top}$. Thus, $U^*=\arg\max_{U\in \mathbb{R}^{d\times d}}\;\text{Trace}(U^\top M_{X_A}U)$ and $U^*=\arg\max_{U\in \mathbb{R}^{d\times d}}\;\text{Trace}(U^\top M_{X_B}U)$ \cite{ghojogh2023}. Thus, 
    \begin{align*}
       U^*&= \underset{U\in \mathbb{R}^{d\times d};\;U^\top U=I}{\arg\max}\;\;
    \min\left(U^\top X_A U,\;U^\top X_B U\right)\\
    &=\underset{U\in \mathbb{R}^{d\times d};\;U^\top U=I}{\arg\max}\;\;
    \min\left(-L(X_A; U),\;-L(X_B;U)\right)\\
    &=\underset{U\in \mathbb{R}^{d\times d};\;U^\top U=I}{\arg\min}\;\;
    \max\left(L(X_A; U),\;L(X_B;U)\right)
    \end{align*}
    Using Theorem 1, we see that for such a $U^*$, we have $L(X_A;U^*)=L(X_B;U^*). \blacksquare$
\end{proof}
Thus, we have shown that the solution of the JEVD optimization problem satisfies the fair PCA condition.

\subsection{\bf JEVD Algorithm for Fair PCA}
To find matrix \( U \) that minimizes the objective function $C(U)$ in eq.(\ref{eq:fair-jevd}), the following iterative approach is used. We use the JEVD algorithm proposed in \cite{Andre2015}. The JEVD algorithm begins by initializing \( T_A^{(1)} \) as \( M_{X_A} \) and \( T_B^{(1)} \) as \( M_{X_B} \). At each iteration \( i \), the matrix \( B_i \) is computed to minimize the diagonalization criterion: $C(B_i) = \Vert \text{ZDiag}\left( B_i^{-1} T_A^{(i)} B_i \right) \Vert_F^2+\Vert\text{ZDiag}\left(B_i^{-1}T_B^{(i)}B_i\right)\Vert_F^2$.
\begin{algorithm}[t]
\caption{JEVD Based Fair PCA Algorithm (JEVD-PCA)}
\label{algo1}
\begin{algorithmic}[1] 
\State \textbf{Input:} Matrices \( M_{X_A},\;M_{X_B}\), maximum iterations \( S_{\text{max}} \), stopping criterion \( \epsilon \)
\State \textbf{Initialize:} \( U = I \), \( i = 1 \). Set \( T_A^{(1)} = M_{X_A} \) and \( T_B^{(1)} = M_{X_B} \).
\While{\( C(U)>\epsilon \) is not satisfied and \( i \leq S_{\text{max}} \)}
    \For{\( m = 1 \) to \( d \)}
        \For{\( n = 1 \) to \( d \)}
            \If{\( m \neq n \)}
                \State Compute \( V_{mn} \) using the update rule:
                \State \(V_{mn} = - \frac{ O_A(mn) \left( \Lambda_A(mm) - \Lambda_A(nn) \right)+O_B(mn) \left( \Lambda_B(mm) - \Lambda_B(nn)\right)}{\left( \Lambda_A(mm) - \Lambda_A(nn) \right)^2+\left( \Lambda_B(mm) - \Lambda_B(nn) \right)^2}\)
            \EndIf
        \EndFor
    \EndFor
    \State Compute \( B_i = I + V \)
    \State Compute \( B_i^{-1} \)
    \For{\(s\in\{A,B\}\)}
        \State Update \( T_s^{(i+1)} = B_i^{-1} T_s^{(i)} B_i \)
    \EndFor
    \State Update \( U \gets U B_i \)
    \State Increment iteration: \( i \gets i + 1 \)
\EndWhile
\State \textbf{Output:} Matrix \( U \)
\end{algorithmic}
\end{algorithm}
To simplify the process, the matrix \( B_i \) is approximated as \( B_i = I + V_i \), where \( V_i \) represents the correction matrix at iteration \( i \). Let $T_s^{(i)}=\Lambda_s^{(i)}+O_s^{(i)},\;s\in\{A,B\}$, where $\Lambda_s^{(i)}$ is the diagonal matrix containing diagonal entries of $T_s^{(i)}$ and $O_s^{(i)}=\text{ZDiag}(T_s^{(i)})$. The optimization process is further simplified by applying a first-order Taylor expansion to the criterion, yielding the approximate form \cite{Andre2015} as $
C_a(V) \approx  \Vert \text{ZDiag}\left( O_A - V\Lambda_A + \Lambda_A V \right) \Vert_F^2 +\Vert \text{ZDiag}\left( O_B - V \Lambda_B + \Lambda_B V\right) \Vert_F^2$.
The correction matrix \( V \) is updated to minimize this quadratic form. The update rule for each element \( V_{mn} \) is:
\[
V_{mn} = - \frac{ O_A(mn) \left( \Lambda_A(mm) - \Lambda_A(nn) \right)+O_B(mn) \left( \Lambda_B(mm) - \Lambda_B(nn)\right)}{\left( \Lambda_A(mm) - \Lambda_A(nn) \right)^2+\left( \Lambda_B(mm) - \Lambda_B(nn) \right)^2}.
\]
The process iterates until the stopping criterion is satisfied. Complete details of the JEVD-based Fair PCA algorithm discussed here are given in Algorithm~\ref{algo1}.
The complexity of the JEVD-PCA algorithm is $\mathcal{O}(d^3)$ \cite{Andre2015} where $d$ is the data dimension.
In contrast, the method in \cite{Samadi2018} which solves SDP has complexity \( \mathcal{O}(d^{6.5}\log\frac{1}{\epsilon}) \).

\section{Experiments}

This section presents the experimental setup and results of the proposed method. For benchmarking, we compare our approach against three models: the standard Principal Component Analysis (PCA) and two additional methods from the literature, \cite{Samadi2018} and \cite{Pelegrina2023} referred to as \texttt{FairPCA\_SDP\_LP} and \texttt{covFairPCA} respectively.
\subsection{Datasets}  
We evaluated our method on the following four datasets: (a) Diabetes Health Indicators Dataset \cite{teboul_2023},
(b) LSAC Dataset \cite{pelegrina_2023}, 
(c) National Poll on Healthy Aging (NPHA) Dataset \cite{national_poll_on_healthy_aging_(npha)_936} and 
(d) Estimation of Obesity Levels Based on Eating Habits and Physical Condition \cite{estimation_of_obesity_levels_based_on_eating_habits_and_physical_condition__544}. 
For all the datasets, we considered gender as the sensitive attribute. 

\subsection{Baselines for Comparison}
We compare our approach against the following baseline methods. (a) {\bf Standard PCA: }We compare with standard PCA which actually faces bias issues. (b) \textbf{FairPCA\_SDP\_LP (\cite{Samadi2018}):} This approach is based on minimizing the objective function (\ref{eq:obj}). It solves an SDP problem to find a projection matrix that balances reconstruction losses across groups. 
(c) \textbf{covFairPCA (\cite{Pelegrina2023}):} The paper proposes two algorithms for fair dimensionality reduction: constrained Fair PCA (c-FPCA). c-FPCA ensures fairness by minimizing disparities in reconstruction errors between sensitive groups while preserving the quality of representation.

Baselines \cite{Samadi2018} and \cite{Pelegrina2023} were chosen as they represent established methods in Fair PCA, providing a solid basis for comparing and validating our approach.

\subsection{Evaluation Metrics}
We use the following metrics to evaluate the performance of the proposed approach and the baseline approaches. (a) \textbf{Reconstruction Error:} As explained in eq.(\ref{eq:recons-error}), this metric measures the squared distance between the original matrix $X$ and the reconstructed matrix $XUU^\top$. (b) \textbf{Variance Explained:} This metric quantifies the proportion of the total data variance captured by the reduced representation. It is calculated as: $\text{Variance Explained} = \frac{\text{Var}(XU)}{\text{Var}(X)}$, where \(\text{Var}(\cdot)\) represents the sum of variances across all features in the data. (c) \textbf{Maximum Mean Discrepancy (MMD\(^2\)):} This metric computes the statistical distance between distributions of two groups in a dataset. It is given by $\text{MMD}^2 = \Vert\mu_A - \mu_B\Vert^2$, where \(\mu_A\) and \(\mu_B\) are the means of the two groups in the dataset being compared.  
MMD$^2$ is used to capture the fairness of the PCA algorithms \cite{Lee2022}.
\begin{figure}[t!]
\begin{center}
\begin{tabular}{c|ccc}
 & Reconstruction Error & Variance Explained & MMD$^2$ \\
\hline
  \begin{turn}{90}{\footnotesize{Diabetes Dataset}}\end{turn} & 
  \includegraphics[scale=0.18]{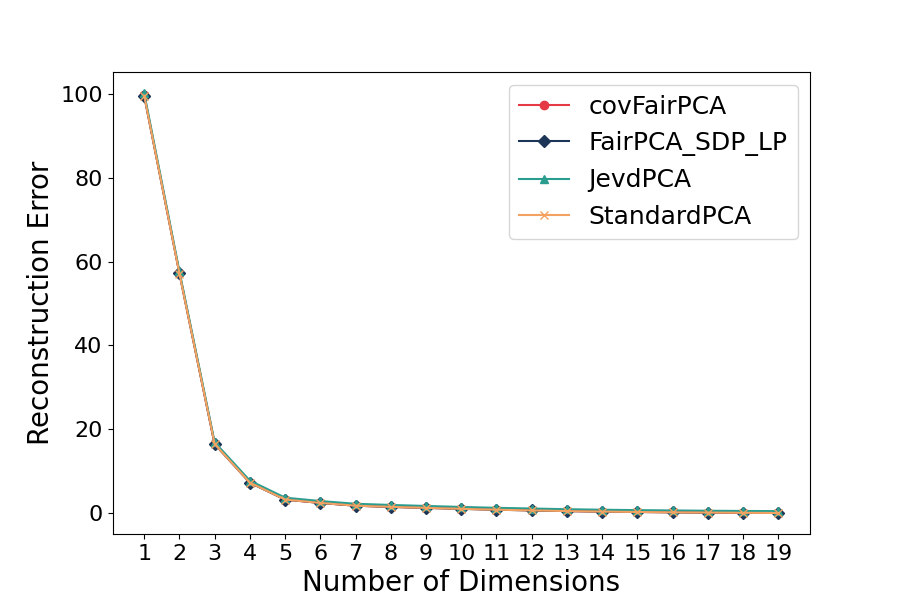} &
   %
   \includegraphics[scale=0.18]{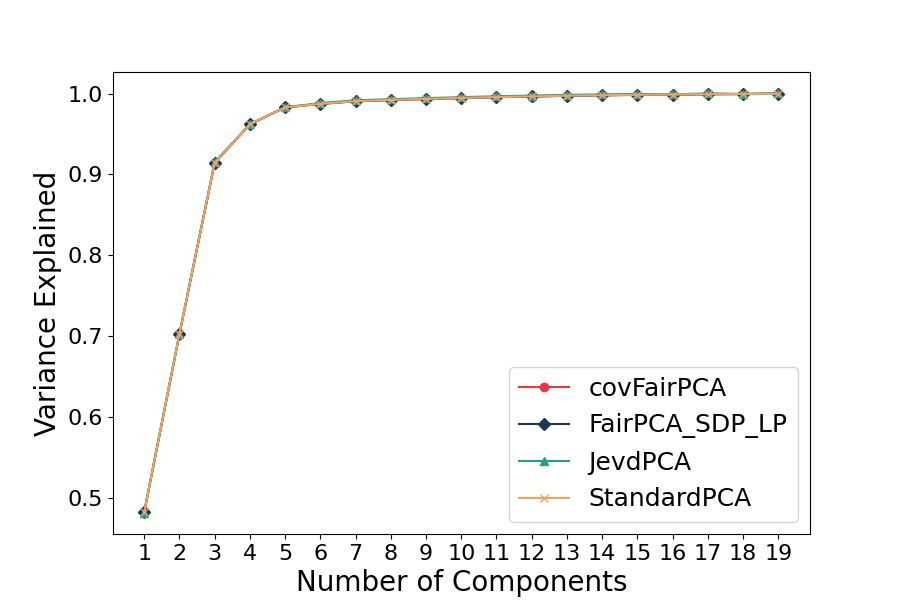}&
   %
   \includegraphics[scale=0.18]{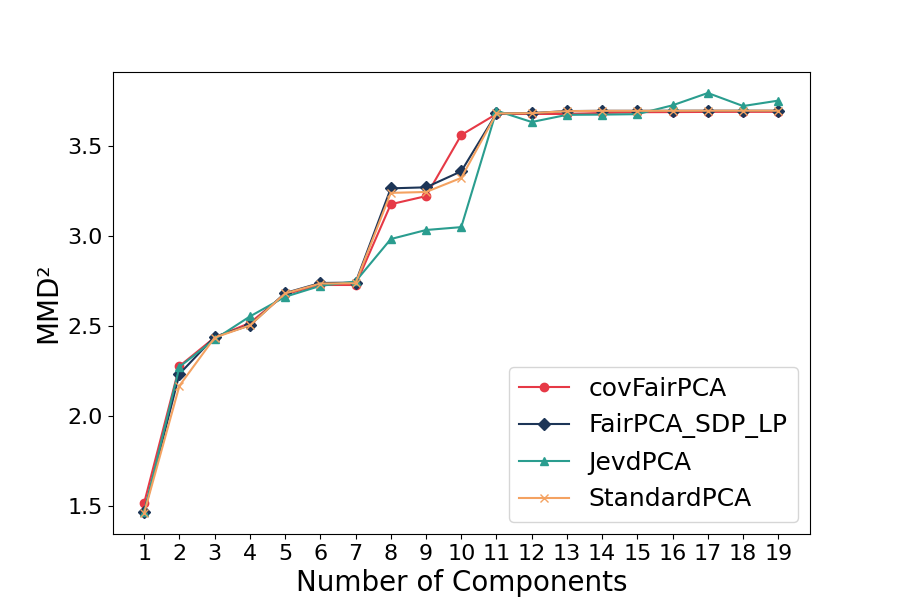}\\
   \hline
   %
   %
    \begin{turn}{90}{\footnotesize{LSAC Dataset}}\end{turn} &
    %
    \includegraphics[scale=0.18]{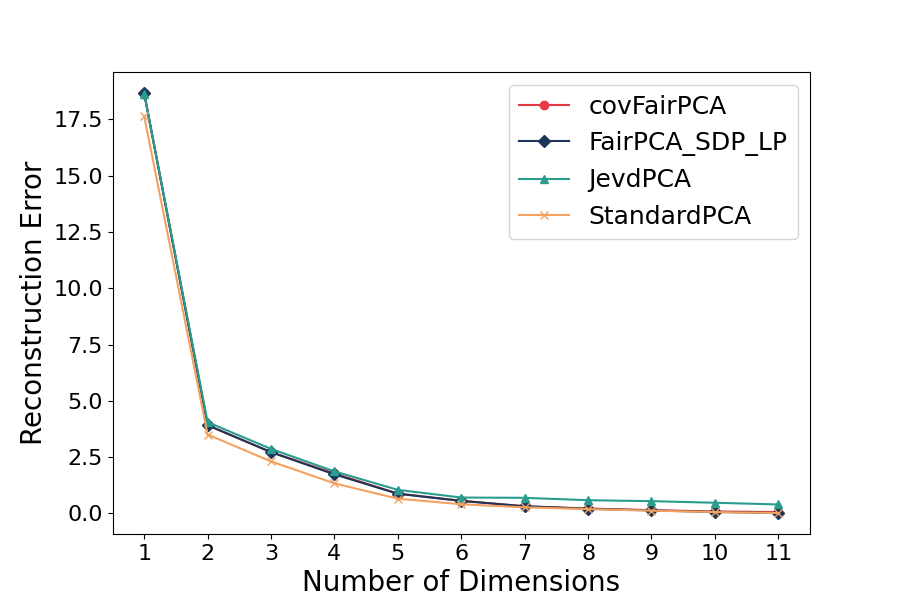}
    &    \includegraphics[scale=0.18]{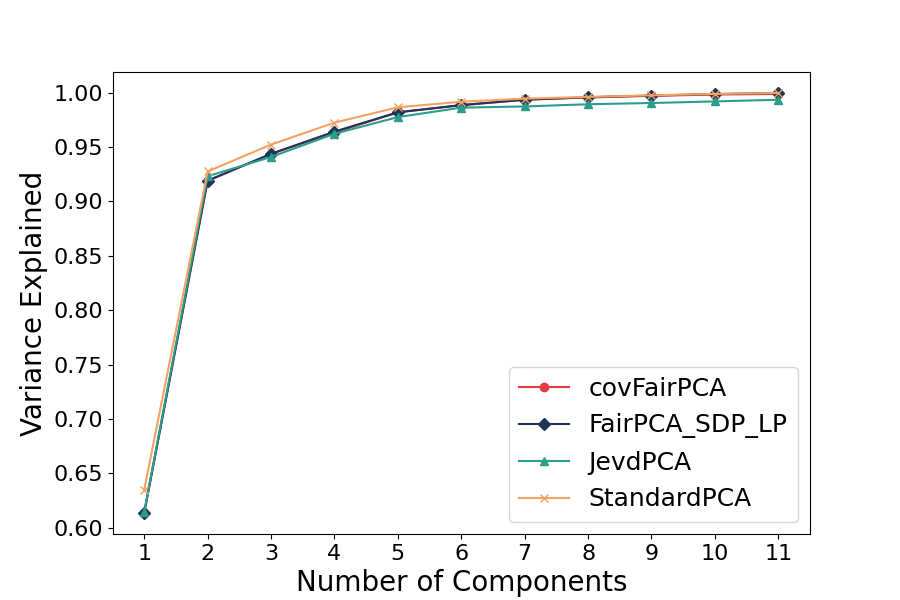}&
     %
     \includegraphics[scale=0.18]{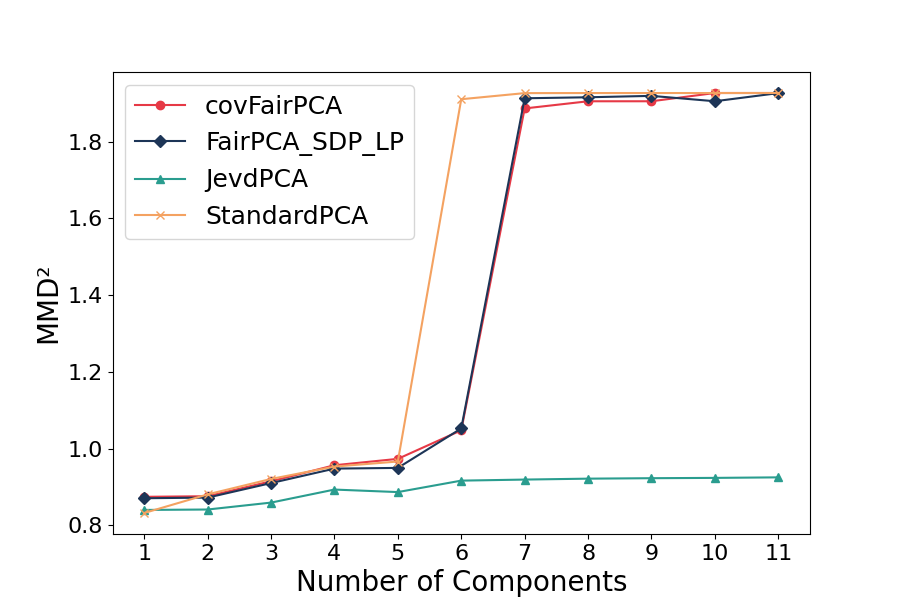}\\
     \hline
    %
    %
     \begin{turn}{90}{\footnotesize{NPHA Dataset}}\end{turn}  & \includegraphics[scale=0.18]{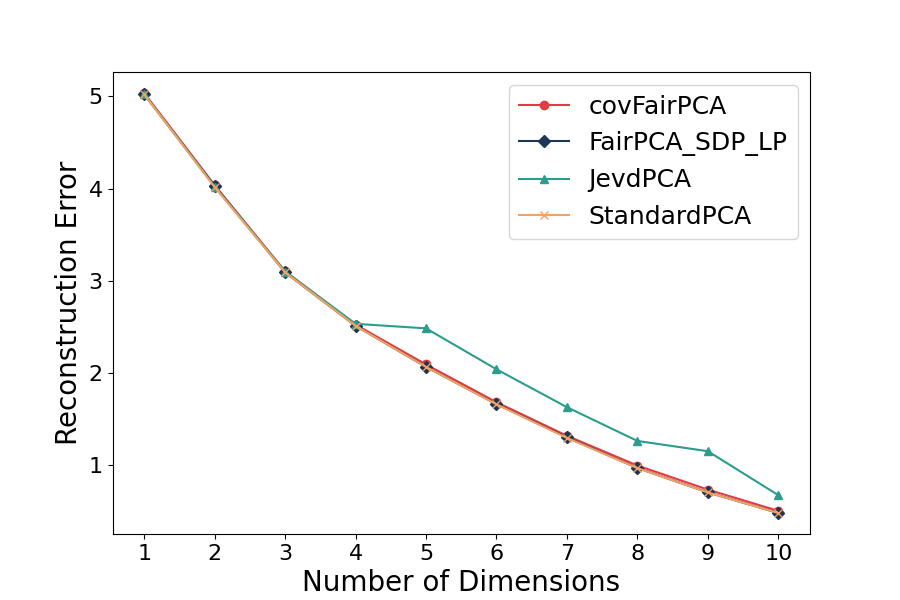}&
    \includegraphics[scale=0.18]{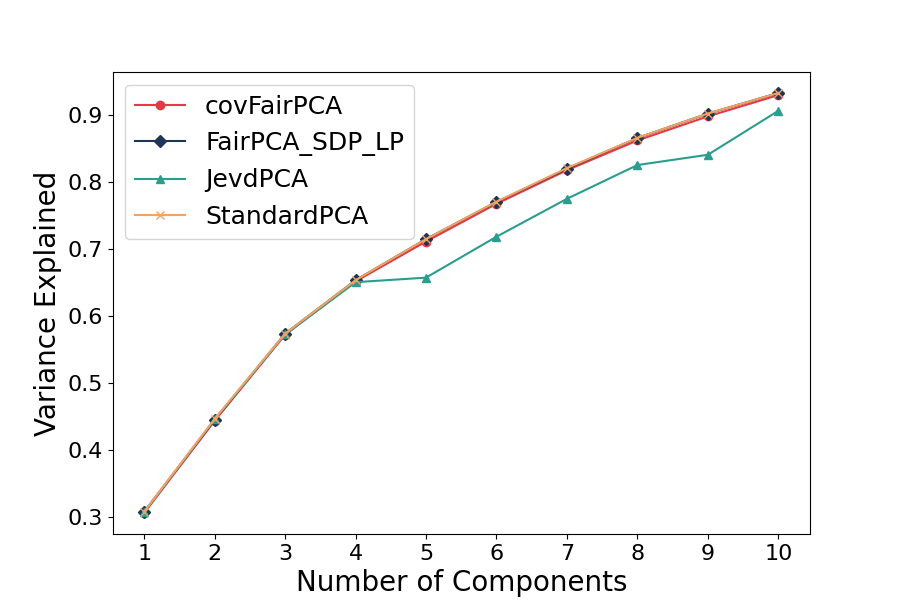}&
      %
      \includegraphics[scale=0.18]{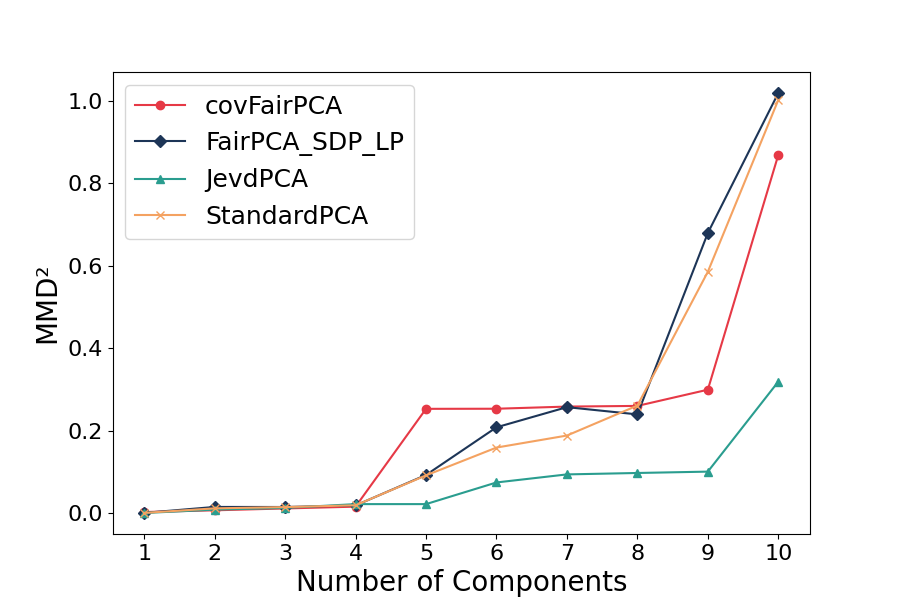}\\
      \hline
    \begin{turn}{90}{\footnotesize{Obesity Dataset}}\end{turn}&
    %
     %
     \includegraphics[scale=0.18]{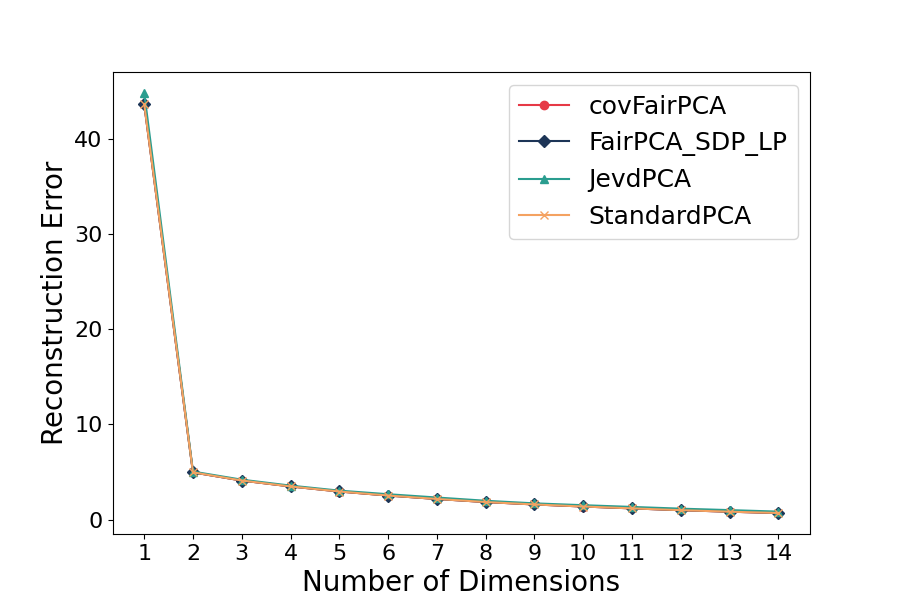}&
    %
    \includegraphics[scale=0.18]{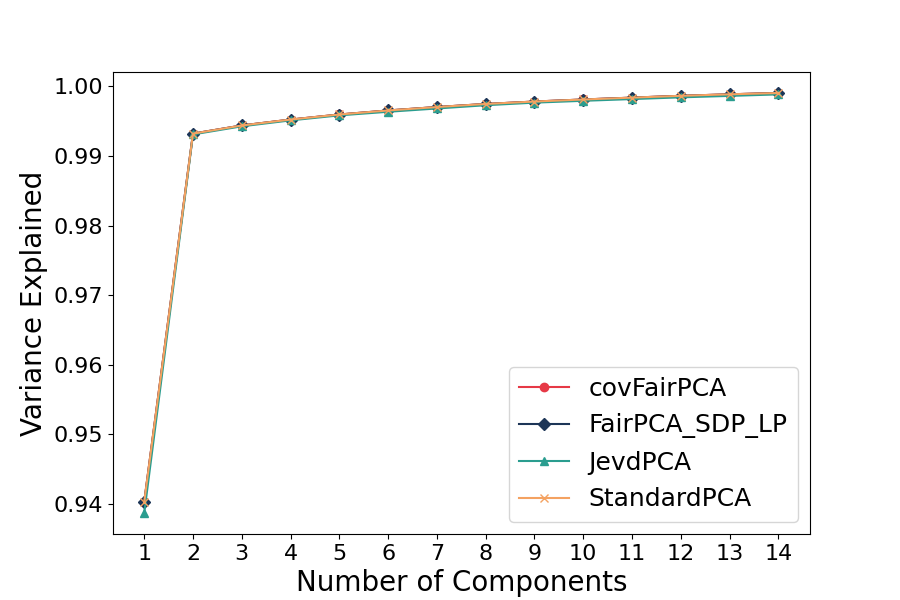}&
     %
     \includegraphics[scale=0.18]{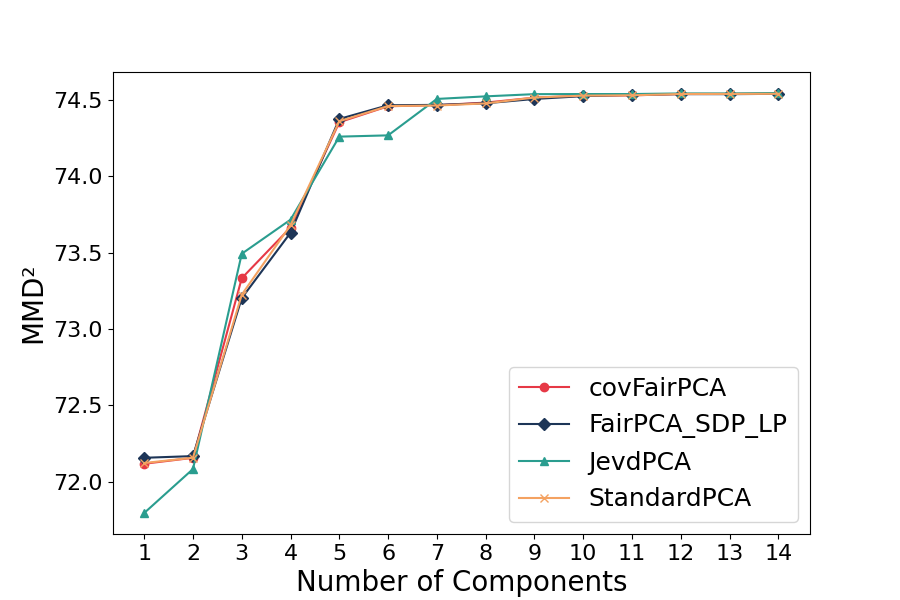}\\
     \hline
    %
    %
      %
      %
    %
    \end{tabular}
        \caption{Comparison Results on Reconstruction Error, Variance Explained, and MMD\(^2\) for the proposed approach JEVDPCA and baseline approaches.}
    \label{fig:metrics_visualization}
    \end{center}
\end{figure}

\subsection{Experimental Results: }
Figure~\ref{fig:metrics_visualization} shows comparison results of the proposed approach JEVDPCA with baseline approaches on three metrics (reconstruction error, variance explained and MMD$^2$). The first column shows results on reconstruction error (lower is better), which quantifies the fidelity of the reduced representation to the original data. The middle plots illustrate Variance Explained (higher is better), reflecting how much of the original data's variability is captured by the representation. Finally, the right plots display MMD\(^2\) (lower is better), measuring the fairness of the representation by assessing the distributional similarity between sensitive groups.

We observe that JEVD-PCA performs better than all the baseline approaches on MMD$^2$ for LSAC and NPHA datasets, which has a huge gap. On Diabetes and Obesity datasets, JEVD-PCA performs comparable to other fair PCA baselines. On the other hand, in terms of reconstruction error and variance explained, the proposed approach JEVD-PCA performs comparably to the other baselines on all the datasets except for NPHA dataset. For NPHA dataset, JEVD-PCA underperforms by a small margin for higher values of $r$. Thus, we see that JEVD-PCA achieved better fairness results compared to the other baseline while maintaining similar reconstruction errors.

Table~\ref{tab:execution_times} shows the training time comparison between the proposed approach and baseline approaches. FairPCA\_SDP\_LP \cite{Samadi2018} also uses equitable reconstruction loss as a criterion for fair PCA. We observe that the proposed approach is much faster than the FairPCA\_SDP\_LP in training time. JEVD-PCA does better on Diabetes and LSAC datasets compared to covFairPCA \cite{pelegrina_2023}. Compared to standard PCA, JEVD-PCA  takes more time as it is solving trying to diagonalize two matrices simultaneously. 


\begin{table}[]
\centering
\caption{Training times (in seconds) for JEVD-PCA and Baseline Approaches.}
\label{tab:execution_times}
\begin{tabular}{|c|c|c|c|c|}
\hline
\textbf{Method} & \textbf{Diabetes} & \textbf{LSAC} & \textbf{NPHA} & \textbf{Obesity} \\ \hline
covFairPCA     & 55.9304  & 0.8066  & 0.3013 & 1.1090   \\ \hline
FairPCA\_SDP\_LP & 136.7052 & 13.6891 & 0.7191 & 514.3610 \\ \hline
JevdPCA        & 5.1725   & 0.4644  & 0.7174 & 12.5314  \\ \hline
StandardPCA    & 17.0916  & 0.3382  & 0.0267 & 0.3314   \\ \hline
\end{tabular}
\end{table}

\section{Conclusion and Future Work}
In this paper, we have introduced a novel approach for Fair Principal Component Analysis (Fair PCA) leveraging Joint Eigenvalue Decomposition (JEVD) to ensure fairness during dimensionality reduction. By integrating fairness constraints directly into the JEVD framework, we have demonstrated how our method mitigates bias amplification in the reduced representations while preserving essential data structure. We compared our approach with existing Fair PCA methods, such as \cite{Samadi2018} and \cite{Pelegrina2023}, and showed that our method provides a more balanced performance in terms of both fairness and reconstruction error across sensitive groups. Our experimental results highlight the effectiveness of Fair PCA using JEVD in maintaining equity without sacrificing accuracy, offering a promising avenue for fair machine learning in high-dimensional data.

Looking ahead, there are several areas for improvement. First, we plan to explore how our method scales with larger datasets and how it could be adapted to other dimensionality reduction techniques like Kernel PCA. Additionally, experimenting with other fairness metrics could help refine the trade-offs between fairness and performance. Finally, testing our approach in real-world applications across different domains will be important for assessing its broader impact.

%
%
%
\bibliographystyle{splncs04}

\bibliography{references}
\newpage
\appendix
\section{Dataset Details}

\subsection{Diabetes Health Indicators Dataset}
The Diabetes Health Indicators Dataset contains healthcare statistics and lifestyle survey information about individuals, along with their diabetes diagnosis. It consists of 35 features, including demographics, lab test results, and survey responses, with a target variable classifying individuals as diabetic, pre-diabetic, or healthy. The dataset is tabular and multivariate, containing 253,680 instances and 21 features. It was created to understand the relationship between lifestyle and diabetes in the US and was funded by the CDC. Each row represents a study participant, and cross-validation or a fixed train-test split is recommended.

\subsection{Law School Admissions Bar Passage Dataset}
This dataset tracks approximately 27,000 law students from 1991 to 1997 through law school, graduation, and bar exam attempts. Collected for the 'LSAC National Longitudinal Bar Passage Study' by Linda Wightman in 1998, it provides data on the demography, experiences, and outcomes of aspiring lawyers. The dataset is tabular, primarily used for classification, and contains categorical and integer features. It was funded by the Law School Admissions Council (LSAC), with each row representing a law student. Cross-validation or a fixed train-test split is recommended.

\subsection{National Poll on Healthy Aging (NPHA)}

The NPHA dataset focuses on health, healthcare, and policy issues affecting older Americans. This subset is used to develop and validate machine learning algorithms for predicting the number of doctors a respondent sees annually. It is a tabular dataset with 714 instances and 14 categorical features. The dataset was created to provide insights into healthcare concerns of Americans aged 50 and older funded by the University of Michigan. Each row represents a senior survey respondent, and cross-validation or a fixed train-test split is recommended.

\subsection{Estimation of Obesity Levels Based on Eating Habits and Physical Condition}

This dataset includes records from Mexico, Peru, and Colombia, assessing obesity levels based on eating habits and physical condition. It contains 2,111 instances with 16 integer features and supports classification, regression, and clustering tasks. The data was partially synthetically generated using the Weka tool and SMOTE filter, with 23\% collected from users via a web platform. It was created to estimate obesity levels based on lifestyle factors, but funding details are not specified. Each row represents an individual, and cross-validation or a fixed train-test split is recommended.

\section{Tables corresponding to Figure 1}
\begin{table}[htbp]
\centering
\scriptsize
\renewcommand{\arraystretch}{1.4}
\setlength{\tabcolsep}{3pt} 
\caption{Comparison of PCA Methods across Different Dimensions for Diabetes Dataset} 
\begin{tabular}{|c|ccc|ccc|ccc|ccc|}
\hline
\multirow{2}{*}{\textbf{Dims}} & \multicolumn{3}{c|}{\textbf{covFairPCA}} & \multicolumn{3}{c|}{\textbf{FairPCA\_SDP\_LP}} & \multicolumn{3}{c|}{\textbf{JevdPCA}} & \multicolumn{3}{c|}{\textbf{StandardPCA}} \\
\cline{2-13}
 & \textbf{R.E.} & \textbf{V.E.} & $\textbf{MMD}^2$ & \textbf{R.E.} & \textbf{V.E.} & $\textbf{MMD}^2$ & \textbf{R.E.} & \textbf{V.E.} &  $\textbf{MMD}^2$ & \textbf{R.E.} & \textbf{V.E.} &  $\textbf{MMD}^2$ \\
\hline
1  & 99.60 & 0.48 & 1.52 & 99.53 & 0.48 & 1.47 & 100.17 & 0.48 & 1.47 & 99.52 & 0.48 & 1.46 \\ \hline
2  & 57.20 & 0.70 & 2.28 & 57.25 & 0.70 & 2.23 & 57.65 & 0.70 & 2.27 & 57.14 & 0.70 & 2.17 \\ \hline
3  & 16.51 & 0.91 & 2.44 & 16.50 & 0.91 & 2.44 & 16.94 & 0.92 & 2.43 & 16.49 & 0.91 & 2.44 \\ \hline
4  & 7.32 & 0.96 & 2.52 & 7.31 & 0.96 & 2.50 & 7.78 & 0.96 & 2.55 & 7.30 & 0.96 & 2.50 \\ \hline
5  & 3.28 & 0.98 & 2.68 & 3.28 & 0.98 & 2.68 & 3.72 & 0.98 & 2.66 & 3.28 & 0.98 & 2.68 \\ \hline
6  & 2.47 & 0.99 & 2.73 & 2.47 & 0.99 & 2.74 & 2.92 & 0.99 & 2.72 & 2.47 & 0.99 & 2.73 \\ \hline
7  & 1.82 & 0.99 & 2.73 & 1.82 & 0.99 & 2.74 & 2.27 & 0.99 & 2.75 & 1.82 & 0.99 & 2.74 \\ \hline
8  & 1.54 & 0.99 & 3.18 & 1.53 & 0.99 & 3.26 & 1.99 & 0.99 & 2.98 & 1.53 & 0.99 & 3.24 \\ \hline
9  & 1.31 & 0.99 & 3.22 & 1.29 & 0.99 & 3.27 & 1.74 & 0.99 & 3.03 & 1.29 & 0.99 & 3.24 \\ \hline
10 & 1.09 & 0.99 & 3.56 & 1.06 & 0.99 & 3.36 & 1.51 & 1.00 & 3.05 & 1.06 & 0.99 & 3.32 \\ \hline
11 & 0.88 & 1.00 & 3.68 & 0.85 & 1.00 & 3.68 & 1.30 & 1.00 & 3.69 & 0.85 & 1.00 & 3.68 \\ \hline
12 & 0.71 & 1.00 & 3.68 & 0.68 & 1.00 & 3.68 & 1.14 & 1.00 & 3.63 & 0.68 & 1.00 & 3.68 \\ \hline
13 & 0.54 & 1.00 & 3.68 & 0.53 & 1.00 & 3.69 & 0.97 & 1.00 & 3.67 & 0.53 & 1.00 & 3.69 \\ \hline
14 & 0.42 & 1.00 & 3.68 & 0.40 & 1.00 & 3.69 & 0.85 & 1.00 & 3.67 & 0.40 & 1.00 & 3.69 \\ \hline
15 & 0.33 & 1.00 & 3.69 & 0.31 & 1.00 & 3.69 & 0.75 & 1.00 & 3.68 & 0.31 & 1.00 & 3.69 \\ \hline
16 & 0.25 & 1.00 & 3.69 & 0.23 & 1.00 & 3.69 & 0.68 & 1.00 & 3.72 & 0.23 & 1.00 & 3.69 \\ \hline
17 & 0.18 & 1.00 & 3.69 & 0.16 & 1.00 & 3.69 & 0.61 & 1.00 & 3.79 & 0.16 & 1.00 & 3.69 \\ \hline
18 & 0.13 & 1.00 & 3.69 & 0.11 & 1.00 & 3.69 & 0.55 & 1.00 & 3.72 & 0.11 & 1.00 & 3.69 \\ \hline
19 & 0.09 & 1.00 & 3.69 & 0.07 & 1.00 & 3.69 & 0.52 & 1.00 & 3.75 & 0.07 & 1.00 & 3.69 \\ \hline
\end{tabular}
\label{tab:pca_comparison}
\end{table}

\begin{table}[htbp]
\centering
\scriptsize
\renewcommand{\arraystretch}{1.4}
\setlength{\tabcolsep}{3pt} 
\caption{Comparison of PCA Methods across Different Dimensions for Obesity Dataset} 
\begin{tabular}{|c|ccc|ccc|ccc|ccc|}
\hline
\multirow{2}{*}{\textbf{Dims}} & \multicolumn{3}{c|}{\textbf{covFairPCA}} & \multicolumn{3}{c|}{\textbf{FairPCA\_SDP\_LP}} & \multicolumn{3}{c|}{\textbf{JevdPCA}} & \multicolumn{3}{c|}{\textbf{StandardPCA}} \\
\cline{2-13}
 & \textbf{R.E.} & \textbf{V.E.} &  $\textbf{MMD}^2$ & \textbf{R.E.} & \textbf{V.E.} &  $\textbf{MMD}^2$ & \textbf{R.E.} & \textbf{V.E.} & $\textbf{MMD}^2$ & \textbf{R.E.} & \textbf{V.E.} &  $\textbf{MMD}^2$\\
\hline
1 & 43.67 & 0.94 & 72.12 & 43.72 & 0.94 & 72.16 & 44.84 & 0.94 & 71.80 & 43.67 & 0.94 & 72.12 \\
2 & 4.96 & 0.99 & 72.16 & 4.97 & 0.99 & 72.17 & 5.04 & 0.99 & 72.08 & 4.96 & 0.99 & 72.16 \\
3 & 4.10 & 0.99 & 73.33 & 4.11 & 0.99 & 73.20 & 4.19 & 0.99 & 73.49 & 4.10 & 0.99 & 73.23 \\
4 & 3.48 & 1.00 & 73.66 & 3.49 & 1.00 & 73.63 & 3.57 & 1.00 & 73.72 & 3.48 & 1.00 & 73.68 \\
5 & 2.95 & 1.00 & 74.35 & 2.96 & 1.00 & 74.38 & 3.04 & 1.00 & 74.26 & 2.95 & 1.00 & 74.36 \\
6 & 2.53 & 1.00 & 74.46 & 2.53 & 1.00 & 74.47 & 2.67 & 1.00 & 74.27 & 2.53 & 1.00 & 74.46 \\
7 & 2.17 & 1.00 & 74.47 & 2.17 & 1.00 & 74.47 & 2.32 & 1.00 & 74.51 & 2.17 & 1.00 & 74.46 \\
8 & 1.84 & 1.00 & 74.48 & 1.85 & 1.00 & 74.48 & 1.98 & 1.00 & 74.52 & 1.84 & 1.00 & 74.48 \\
9 & 1.59 & 1.00 & 74.52 & 1.59 & 1.00 & 74.51 & 1.70 & 1.00 & 74.54 & 1.59 & 1.00 & 74.52 \\
10 & 1.37 & 1.00 & 74.53 & 1.37 & 1.00 & 74.53 & 1.51 & 1.00 & 74.54 & 1.37 & 1.00 & 74.53 \\
11 & 1.18 & 1.00 & 74.53 & 1.18 & 1.00 & 74.53 & 1.33 & 1.00 & 74.54 & 1.18 & 1.00 & 74.53 \\
12 & 0.99 & 1.00 & 74.54 & 0.99 & 1.00 & 74.54 & 1.15 & 1.00 & 74.54 & 0.99 & 1.00 & 74.54 \\
13 & 0.83 & 1.00 & 74.54 & 0.83 & 1.00 & 74.54 & 1.00 & 1.00 & 74.54 & 0.83 & 1.00 & 74.54 \\
14 & 0.69 & 1.00 & 74.54 & 0.69 & 1.00 & 74.54 & 0.84 & 1.00 & 74.55 & 0.69 & 1.00 & 74.54 \\
\hline
\end{tabular}
\label{tab:pca_comparison_obesity}
\end{table}

\begin{table}[htbp]
\centering
\scriptsize
\renewcommand{\arraystretch}{1.4}
\setlength{\tabcolsep}{3pt} 
\caption{Comparison of PCA Methods across Different Dimensions for LSAC Dataset} 
\begin{tabular}{|c|ccc|ccc|ccc|ccc|}
\hline
\multirow{2}{*}{\textbf{Dims}} & \multicolumn{3}{c|}{\textbf{covFairPCA}} & \multicolumn{3}{c|}{\textbf{FairPCA\_SDP\_LP}} & \multicolumn{3}{c|}{\textbf{JevdPCA}} & \multicolumn{3}{c|}{\textbf{StandardPCA}} \\
\cline{2-13}
 & \textbf{R.E.} & \textbf{V.E.} &  $\textbf{MMD}^2$ & \textbf{R.E.} & \textbf{V.E.} &  $\textbf{MMD}^2$& \textbf{R.E.} & \textbf{V.E.} &  $\textbf{MMD}^2$& \textbf{R.E.} & \textbf{V.E.} & $\textbf{MMD}^2$ \\
\hline
1 & 18.67 & 0.61 & 0.87 & 18.65 & 0.61 & 0.87 & 18.63 & 0.61 & 0.84 & 17.64 & 0.63 & 0.83 \\
2 & 3.92 & 0.92 & 0.88 & 3.91 & 0.92 & 0.87 & 4.05 & 0.92 & 0.84 & 3.50 & 0.93 & 0.88 \\
3 & 2.72 & 0.94 & 0.91 & 2.72 & 0.94 & 0.91 & 2.87 & 0.94 & 0.86 & 2.31 & 0.95 & 0.92 \\
4 & 1.78 & 0.96 & 0.96 & 1.73 & 0.96 & 0.95 & 1.87 & 0.96 & 0.89 & 1.34 & 0.97 & 0.95 \\
5 & 0.88 & 0.98 & 0.97 & 0.88 & 0.98 & 0.95 & 1.05 & 0.98 & 0.89 & 0.65 & 0.99 & 0.97 \\
6 & 0.56 & 0.99 & 1.05 & 0.56 & 0.99 & 1.05 & 0.71 & 0.99 & 0.92 & 0.41 & 0.99 & 1.91 \\
7 & 0.32 & 0.99 & 1.89 & 0.32 & 0.99 & 1.91 & 0.69 & 0.99 & 0.92 & 0.27 & 0.99 & 1.93 \\
8 & 0.22 & 1.00 & 1.91 & 0.21 & 1.00 & 1.92 & 0.59 & 0.99 & 0.92 & 0.19 & 1.00 & 1.93 \\
9 & 0.15 & 1.00 & 1.91 & 0.13 & 1.00 & 1.92 & 0.55 & 0.99 & 0.92 & 0.12 & 1.00 & 1.93 \\
10 & 0.09 & 1.00 & 1.93 & 0.07 & 1.00 & 1.91 & 0.47 & 0.99 & 0.92 & 0.06 & 1.00 & 1.93 \\
11 & 0.06 & 1.00 & 1.93 & 0.02 & 1.00 & 1.93 & 0.40 & 0.99 & 0.92 & 0.02 & 1.00 & 1.93 \\
\hline
\end{tabular}
\label{tab:pca_comparison_lsac}
\end{table}

\begin{table}[htbp]
\centering
\scriptsize
\renewcommand{\arraystretch}{1.4}
\setlength{\tabcolsep}{3pt} 
\caption{Comparison of PCA Methods across Different Dimensions for NPHA Dataset} 
\begin{tabular}{|c|ccc|ccc|ccc|ccc|}
\hline
\multirow{2}{*}{\textbf{Dims}} & \multicolumn{3}{c|}{\textbf{covFairPCA}} & \multicolumn{3}{c|}{\textbf{FairPCA\_SDP\_LP}} & \multicolumn{3}{c|}{\textbf{JevdPCA}} & \multicolumn{3}{c|}{\textbf{StandardPCA}} \\
\cline{2-13}
 & \textbf{R.E.} & \textbf{V.E.} &  $\textbf{MMD}^2$ & \textbf{R.E.} & \textbf{V.E.} & $\textbf{MMD}^2$ & \textbf{R.E.} & \textbf{V.E.} &  $\textbf{MMD}^2$& \textbf{R.E.} & \textbf{V.E.} & $\textbf{MMD}^2$\\
\hline
1 & 5.04 & 0.31 & 0.00 & 5.02 & 0.31 & 0.00 & 5.03 & 0.31 & 0.00 & 5.02 & 0.31 & 0.00 \\
2 & 4.04 & 0.44 & 0.01 & 4.03 & 0.45 & 0.02 & 4.02 & 0.45 & 0.01 & 4.02 & 0.45 & 0.01 \\
3 & 3.10 & 0.57 & 0.01 & 3.09 & 0.57 & 0.02 & 3.10 & 0.57 & 0.01 & 3.09 & 0.57 & 0.02 \\
4 & 2.53 & 0.65 & 0.02 & 2.51 & 0.65 & 0.02 & 2.53 & 0.65 & 0.02 & 2.51 & 0.65 & 0.02 \\
5 & 2.09 & 0.71 & 0.25 & 2.07 & 0.72 & 0.09 & 2.49 & 0.66 & 0.02 & 2.07 & 0.72 & 0.09 \\
6 & 1.69 & 0.77 & 0.25 & 1.66 & 0.77 & 0.21 & 2.04 & 0.72 & 0.07 & 1.66 & 0.77 & 0.16 \\
7 & 1.32 & 0.82 & 0.26 & 1.30 & 0.82 & 0.26 & 1.63 & 0.78 & 0.09 & 1.30 & 0.82 & 0.19 \\
8 & 1.00 & 0.86 & 0.26 & 0.97 & 0.87 & 0.24 & 1.27 & 0.83 & 0.10 & 0.97 & 0.87 & 0.26 \\
9 & 0.74 & 0.90 & 0.30 & 0.71 & 0.90 & 0.68 & 1.15 & 0.84 & 0.10 & 0.71 & 0.90 & 0.59 \\
10 & 0.51 & 0.93 & 0.87 & 0.48 & 0.93 & 1.02 & 0.68 & 0.91 & 0.32 & 0.48 & 0.93 & 1.00 \\
\hline
\end{tabular}
\label{tab:pca_comparison_npha}
\end{table}

\end{document}